\documentclass{article}






\usepackage[nonatbib,final]{neurips_2020}

\usepackage{multicol}
\usepackage{caption}
\usepackage[utf8]{inputenc} 
\usepackage[T1]{fontenc}    
\usepackage{url}            
\usepackage{booktabs}       
\usepackage{graphicx}
\usepackage{amsfonts}       
\usepackage{nicefrac}       
\usepackage{microtype}      
\usepackage{mathtools}
\usepackage{enumitem}
\usepackage{amsmath,amssymb,amsthm,relsize}
\newtheorem{theorem}{Theorem}
\newtheorem{proposition}{Proposition}
\newcommand{\allfor}{\displaystyle\mathop{\mathlarger{\forall}}}

\newtheorem{lemma}[theorem]{Lemma}

\title{A New Neural Network Architecture Invariant \\ to the Action of Symmetry Subgroups}

%

\author{%
  Piotr Kicki \mbox{~~~} Piotr Skrzypczyński \\
  Institute of Robotics and Machine Intelligence\\
  Poznan University of Technology, Poznań, Poland \\
  \texttt{\{piotr.kicki,piotr.skrzypczynski\}@put.poznan.pl} \\
  \And
  Mete Ozay\\
  \texttt{meteozay@gmail.com}\\
}

\begin{document}

\maketitle

\begin{abstract}
We propose a  computationally efficient $G$-invariant
neural network that approximates functions invariant to the action of a given
permutation subgroup $G \leq S_n$ of the symmetric group on input data.
The key element of the proposed network architecture is a new $G$-invariant transformation module, which produces a $G$-invariant latent representation of the input data.
Theoretical considerations are supported by numerical experiments,
which demonstrate the effectiveness and strong generalization properties of the proposed method in comparison to other $G$-invariant neural networks.
\end{abstract}

\section{Introduction}
The design of probabilistic models which reflect symmetries existing in data is considered an important task following the notable success of deep neural networks, such as convolutional neural networks (CNNs) \cite{cnn2} and PointNet \cite{pointnet}.
Models with superior performance can be obtained using prior knowledge about data and desired properties of the models, such as permutation invariance \cite{pointnet}. Similarly, translation equivariance can be exploited for CNNs \cite{G_equiv_cnns} to reduce their model size.

Nevertheless, researchers have been working on developing a general approach which enables to design architectures that are invariant and equivariant to the actions of particular groups. Invariance and equivariance of learning models to actions of various groups have been discussed in the literature \cite{cohen2,itpooling,parameter_sharing,deep_sets}. 
However, in this paper, we only consider invariance to permutation groups $G$, which are the subgroups of the symmetric group $S_n$ of all permutations on a finite set of $n$ elements, as it covers many interesting applications.

\begin{figure}
\centering
\begin{minipage}[t]{.47\textwidth}
  \centering
  \includegraphics[width=\linewidth]{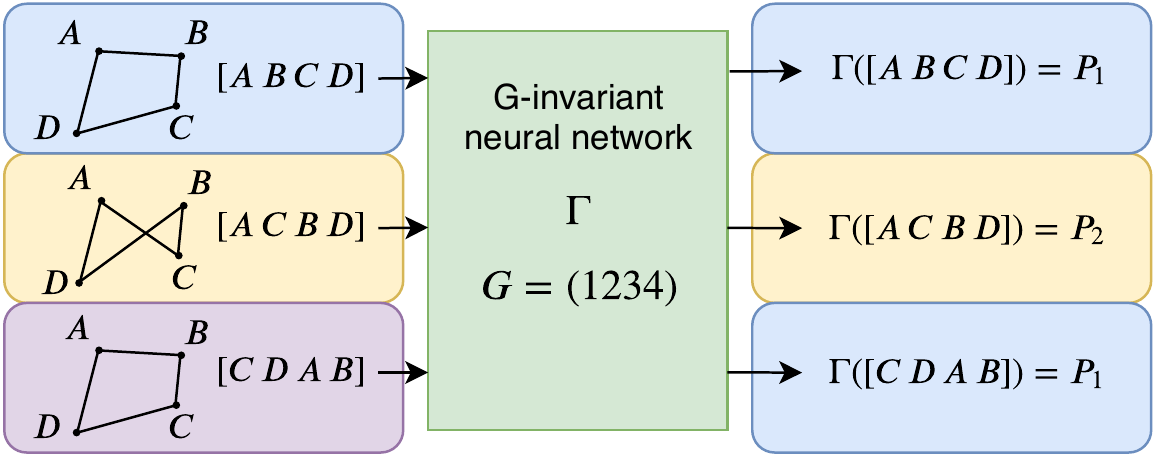}
  \captionof{figure}{An illustration of employment of the proposed {$G$-invariant} neural network $\Gamma$ for estimation of area of quadrangles. If the consecutive vertices are provided in the same order (e.g. $[C\,D\,B\,A]$ or $[A\,B\,C\,D]$), then the network $\Gamma$ computes the same area $P_1$. However, if the order changes (i.e.  $[A\,C\,B\,D]$), then the network $\Gamma$ estimates a different area $P_2$.}
  \label{fig:intro}
\end{minipage}
\hspace{0.7pc}%
\begin{minipage}[t]{.5\textwidth}
  \centering
  \includegraphics[width=\linewidth]{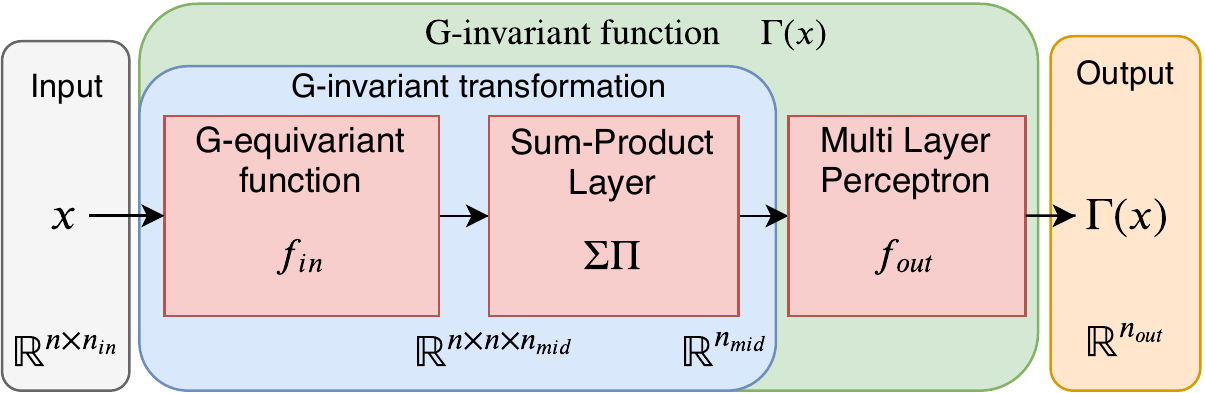}
  \captionof{figure}{An illustration of the proposed {$G$-invariant neural network}. An input $x$ is processed by the $G$-invariant transformation (blue), which produces a $G$-invariant representation of the input. Then, the $G$-invariant representation is passed to the Multi Layer Perceptron which produces the output vector $\Gamma(x)$.}
  \label{fig:ginv}
\end{minipage}
\vspace{-0.35cm}
\end{figure}

An example of the employment of the proposed $G$-invariant network for a set of quadrangles is illustrated in Figure~\ref{fig:intro}. The network $\Gamma$ receives a matrix representation of the quadrangles (i.e. a vector of 4 points on a plane) and outputs the areas covered by those quadrangles. One can spot that, no matter which point will be given first, if the consecutive vertices are provided in the right order, then the area of the figure will remain the same. Such a property can be described by $G$-invariance, where $G = (1234)$\footnote{$G=(1234)$ denotes a group $G$ generated by the permutation $(1234)$, in which the first element is replaced by the second, the second by the third and so on, till the last element being replaced by the first one.}.

Recently, Maron et al. \cite{maron19} proposed a $G$-invariant neural network architecture for some finite subgroups $G \leq S_n$, and proved its universality. Unfortunately, their solution is intractable for larger inputs and groups, because of the rapidly growing size of tensors and the number of operations needed for forward and backward passes in the network.

The aim of this paper is to propose a method that enables us to design a novel $G$-invariant architecture for a given finite group $G \leq S_n$, which is tractable and generalizes well.

\section{Our Proposed G-invariant Neural Network Architecture}
\vspace{-2mm}
We introduce a novel $G$-invariant neural network architecture, which exploits the theory of invariant polynomials to achieve a flexible scheme for $G$-invariant transformation of data for some known and finite group $G \leq S_{n}$, where $S_n$ is a symmetric group and $|G| = m$.
We assume that an input $x \in \mathbb{R}^{n \times n_{in}}$ to the proposed network is a tensor\footnote{We use matrix notation to denote tensors in this paper.} $x = [x_1\, x_2\,  \dots\, x_n]^T$ of $n$ vectors $x_i \in \mathbb{R}^{n_{in}}$, ${i =1,2,\ldots, n}$. 
A function $f: \mathbb{R}^{n \times n_{in}} \rightarrow \mathbb{R}$  is $G$-invariant if $f$ satisfies\footnote{$\allfor_{y \in Y} P(Y)$ means that ``predicate $P(Y)$ is true for all $y \in Y$''.}
\begin{equation}
\label{eq:Ginv}
  \allfor_{x \in \mathbb{R}^{n \times n_{in}}}\allfor_{g \in G}\,f\left(g(x)\right) = f(x), 
\end{equation}
where the action of the group element $g$ on $x$ is defined by
\begin{equation}
\label{eq:Gequiv}
    g(x) = \{x_{\sigma_g(1)}, x_{\sigma_g(2)}, \dots, x_{\sigma_g(n)}\},
\end{equation}
where $\sigma_g(i)$ denotes the action of the group element $g$ on the specific index $i$ and $x_{\sigma_g(i)} \in \mathbb{R}^{n_{in}}$.

Our proposed $G$-invariant neural network is illustrated in Figure \ref{fig:ginv}, and defined by a  function ${\Gamma: \mathbb{R}^{n \times n_{in}} \rightarrow \mathbb{R}^{n_{out}}}$ of the following form
\begin{equation}
\label{eq:Gamma}
   \Gamma(x) = f_{out}(\Sigma\Pi(f_{in}(x))),
\end{equation}
where $f_{in}$ is a $G$-equivariant input transformation function, $\Sigma\Pi$ is a function which comprises {$G$-invariant} transformation when combined with $f_{in}$, and $f_{out}$ is an output transformation function.
The general idea of the proposed architecture is to define a $G$-invariant transformation, which uses the sum of $G$-invariant polynomials ($\Sigma\Pi$) of $n$ variables, which are the outputs of $f_{in}$. This transformation produces a $G$-invariant feature vector, which is processed by another function $f_{out}$ that is approximated by a Multi-Layer Perceptron.

First, let us define the $G$-equivariant input transformation function $f_{in}: \mathbb{R}^{n \times n_{in}} \rightarrow \mathbb{R}^{n \times n \times n_{mid}}$, where $n_{mid}$ is the size of the feature vector. This function can be represented as a vector ${\Phi = [\phi_1\, \phi_2\, \dots\, \phi_n]}$ of neural networks, where each function $\phi_i: \mathbb{R}^{n_{in}} \rightarrow \mathbb{R}^{n_{mid}}$ is applied on all elements of the set of input vectors $\{x_i\}_{i=1}^n$, and transforms them to the $n_{mid}$ dimensional vector.
As a result, the operation of the $f_{in}$ function can be formulated by
\begin{equation}
\label{eq:fin}
    f_{in}(x) = 
    \begin{bmatrix}
        \Phi(x_1)\\
        \Phi(x_2)\\
        \vdots\\
        \Phi(x_n)\\
    \end{bmatrix} = 
    \begin{bmatrix}
        \phi_1(x_1) & \ldots & \phi_n(x_1)\\
        \vdots & \ddots & \vdots \\
        \phi_1(x_n) & \ldots & \phi_n(x_n)\\
    \end{bmatrix}.
\end{equation}
One can see that $f_{in}(x)$ is $G$-equivariant, since the action of the vector $\Phi$ of functions is the same for each element of the vector $x$, thus it transposes the rows of the matrix form \eqref{eq:fin} according to $g \in G$, which is equivalent to transposing the rows after the calculation of $f_{in}(x)$.

Second, we define the function $\Sigma\Pi: \mathbb{R}^{n \times n \times n_{mid}} \rightarrow \mathbb{R}^{n_{mid}}$, which constructs $G$-invariant polynomials of outputs obtained from $f_{in}$, by
\vspace{-0.09in}
\begin{equation}
\label{eq:sigmapi}
    \Sigma\Pi(x) = \sum_{g \in G}\prod_{j=1}^{n} x_{\sigma_g(j), j}.
\end{equation}
To see the $G$-invariance of $\Sigma\Pi(f_{in}(x))$, we substitute $x$ from \eqref{eq:sigmapi} with \eqref{eq:fin} to obtain
\vspace{-0.1in}
\begin{equation}
\label{eq:sigmapifin_G}
    \Sigma\Pi(f_{in}(x)) = \sum_{g \in G}\prod_{j=1}^{n}  \phi_j(x_{\sigma_g(j)}).
\end{equation}
Then, we can show that \eqref{eq:sigmapifin_G} is $G$-invariant by checking whether \eqref{eq:Ginv} holds for any input $x$ and any group element $g' \in G$ as follows:
\vspace{-0.05in}
\small
\begin{equation}
\label{eq:anyginG}
    \Sigma\Pi(g'(f_{in}(x))) = \sum_{g \in G}\prod_{j=1}^{n} \phi_j(x_{\sigma_{g'}(\sigma_g(j))}) 
    = \sum_{g \in G}\prod_{j=1}^{n}  \phi_j(x_{\sigma_g(j)}) = \Sigma\Pi(f_{in}(x))
\end{equation}
\normalsize
since any group element acting on the group leads to the group itself.
The product operation employed in \eqref{eq:sigmapi}-\eqref{eq:anyginG} is performed element-wise.
We define the output function $f_{out}: \mathbb{R}^{n_{mid}} \rightarrow \mathbb{R}^{n_{out}}$ following the structure of a typical fully connected neural network by 
\vspace{-0.05in}
\begin{equation}
\label{eq:fout}
  f_{out}(x) = \sum_{i=1}^{N}c_i \sigma\left(\sum_{j=1}^{n_{mid}} w_{ij} x_j + h_i\right),
\end{equation}
where $N \in \mathbb{N}_{+}$ is a parameter, $\sigma$ is a non-polynomial activation function and $c_i, w_{ij}, h_i \in \mathbb{R}$ are coefficients. We elucidate the universality of our $G$-Invariant network in the next proposition\footnote{The proof of the proposition is given in the supplemental material.}.
\begin{proposition}
\label{thm:proposition}
The network function (\ref{eq:Gamma}), can approximate any $G$-invariant function $f: V \rightarrow \mathbb{R}$, where $V$ is a compact subset of $\mathbb{R}^{n \times n_{in}}$ and $G \leq S_n$ is a finite group, as long as number of features $n_{mid}$ obtained at the output of input transformation network $f_{in}$ is greater than or equal to the size $N_{inv}$ of the generating set $\mathcal{F}$ of polynomial $G$-invariants.
\end{proposition}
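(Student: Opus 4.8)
The plan is to decompose the statement into three classical ingredients — finite generation of the ring of polynomial invariants, separation of orbits by those invariants, and the universal approximation theorem for multilayer perceptrons — and then to glue them together using the specific algebraic shape of the $\Sigma\Pi$ module. First I would recall that for a finite group $G \le S_n$ the ring $\mathbb{R}[x]^G$ of $G$-invariant polynomials in the $n\cdot n_{in}$ coordinates of $x$ is finitely generated (Hilbert/Noether), and that one may take as a generating set $\mathcal{F} = \{p_1,\dots,p_{N_{inv}}\}$ a collection of symmetrized monomials, i.e.\ images $p_k = \sum_{g\in G}\prod_{j=1}^n M_{k,j}(x_j)$ of monomials under the group-averaging (Reynolds) operator, where each $M_{k,j}$ is a monomial in the $n_{in}$ components of $x_j$. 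The decisive observation is that such a symmetrized monomial already has \emph{exactly} the form produced by the module: setting $[\phi_j]_k = M_{k,j}$ for the $k$-th output coordinate of $\phi_j$, the $k$-th channel of $\Sigma\Pi(f_{in}(x))$ equals $\sum_{g\in G}\prod_j M_{k,j}(x_{\sigma_g(j)}) = p_k(x)$. Hence with $n_{mid}\ge N_{inv}$ channels the map $\Sigma\Pi\circ f_{in}$ can be arranged to output the whole vector $P(x) = (p_1(x),\dots,p_{N_{inv}}(x))$ of generators, and since each $M_{k,j}$ is continuous, the universal approximation theorem lets the networks $\phi_j$ reproduce them to arbitrary accuracy on the compact set $V$.

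Second, I would establish that $P$ separates the $G$-orbits of $V$: because $G$ is finite, each orbit is a finite point set, so Lagrange interpolation followed by symmetrization yields an invariant polynomial distinguishing any two distinct orbits, which is in turn a polynomial in $\mathcal{F}$; therefore $P(x)=P(y)$ forces $x$ and $y$ into the same orbit. Consequently the target $G$-invariant continuous function $f$ is constant on the fibers of $P$ and factors as $f = h\circ P$ with $h$ continuous on the compact image $P(V)$ (a continuous surjection of compact spaces onto a Hausdorff space is a quotient map); I would then extend $h$ to a continuous function on all of $\mathbb{R}^{N_{inv}}$ by the Tietze extension theorem.

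Finally, I would invoke universal approximation once more to approximate $h$ uniformly on $P(V)$ by an MLP $f_{out}$ of the form (\ref{eq:fout}), and combine the two approximations: using uniform continuity of $h$ on compacts, $f_{out}\bigl(\Sigma\Pi(f_{in}(x))\bigr)$ approximates $h(P(x)) = f(x)$ uniformly on $V$, with the total error controlled by standard compactness and Lipschitz-on-compacts estimates for the composition. I expect the main obstacle to be the passage from \emph{polynomial} invariants to arbitrary \emph{continuous} invariant functions: the generating set $\mathcal{F}$ controls only the polynomial invariant ring, so it is the orbit-separation property together with the continuous factorization $f=h\circ P$ that upgrades ``generates all invariant polynomials'' into ``approximates all continuous invariant functions,'' and it is precisely here that finiteness of $G$ and compactness of $V$ are essential.
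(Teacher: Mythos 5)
Your proposal is correct, and its architectural core coincides with the paper's: your ``decisive observation'' --- that a symmetrized monomial $\sum_{g\in G}\prod_j M_{k,j}(x_{\sigma_g(j)})$ has exactly the shape of one channel of $\Sigma\Pi(f_{in}(x))$, so that with $n_{mid}\ge N_{inv}$ channels and universal approximation for the $\phi_j$ the module approximates the whole generator vector $P(x)$ --- is precisely the paper's Lemma~1. Where you genuinely diverge is in the reduction from continuous invariant functions to polynomials in the generators: the paper imports this wholesale as Yarotsky's theorem (Theorem~2 of the supplement), which directly asserts that any continuous $G$-invariant map is approximated by an MLP applied to the generating invariants with a quantitative $2\epsilon$ guarantee, and then chains errors using a local Lipschitz constant $k$ of $f_{out}$ to reach the explicit bound $\epsilon(3+kmn)$. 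You instead re-prove that reduction from first principles: orbit separation via Lagrange interpolation plus Reynolds averaging, the factorization $f=h\circ P$ through the compact-to-Hausdorff quotient map, Tietze extension of $h$, and a second application of universal approximation. The paper's route is shorter and produces an explicit error constant; yours is self-contained, makes visible exactly where finiteness of $G$ and compactness of $V$ enter, and sidesteps the paper's somewhat fragile quantitative chain --- its product estimate $\left|\prod_i x_{\sigma_g(i)}^{b_i}-\prod_j\phi_j(x_{\sigma_g(j)})\right|\le n\epsilon$ implicitly needs bounded coordinates and the ad hoc assumption $0\notin V$, whereas your uniform-continuity-on-compacts argument handles the composition cleanly. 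Two mild points you should make explicit: first, for the fiber argument giving $f=h\circ P$ you want $V$ to be $G$-stable (or replace it by the still-compact saturation $G\cdot V$ and extend $f$ invariantly); second, the claim that $\mathcal{F}$ may be taken to consist of orbit sums of monomials should be attributed to Noether's theorem (with degree bound $|G|$), which also justifies the same assumption made silently in the paper's Lemma~1.
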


\vspace{-0.2cm}
\section{Experimental Analysis}
\vspace{-0.3cm}
We evaluate the accuracy of the proposed architecture and analyze its invariance properties in the following two tasks\footnote{The code is available on \url{https://github.com/Kicajowyfreestyle/G-invariant}}:

\textbf{(i) $G$-invariant Polynomial Regression}: The goal of this task is to train a model to approximate a $G$-invariant polynomial. In the experiments, we consider various polynomials: $P_{\mathbb{Z}_k}, P_{S_k}, P_{D_{2k}}, P_{A_k}$ and $P_{S_k \times S_l}$, which are invariant to the cyclic group $\mathbb{Z}_k$, permutation group $S_k$, dihedral group $D_{2k}$, alternating group $A_k$ and direct product of two permutation groups $S_k \times S_l$, respectively. To examine generalization abilities of models of the proposed $G$-invariant network architecture, they were trained using only 16 different random points in $[0; 1]^5$, whereas 480 and 4800 randomly generated points were used for validation and testing, respectively.

\textbf{(ii) Estimation of the Area of Convex Quadrangles:} In this task, models are trained to estimate areas of convex quadrangles. An input is a vector of 4 points lying in $\mathbb{R}^{4 \times 2}$, each described by its $x$ and $y$ coordinates. The desired estimator is a simple example of the $G$-invariant function, where $G$ is identified by $\mathbb{Z}_4 = (1234)$.
Both training and validation set contains 256 examples (randomly generated convex quadrangles with their areas), while the test dataset contains 1024 examples.

\subsection{Compared Architectures and Models}
\vspace{-1mm}
All of the experiments presented below consider networks of different architectures for which the number of weights was fixed at a similar level for the given task for a fair comparison.
The employed architectures are the following:
\vspace{-0.1in}
\begin{itemize}[leftmargin=2em]
\setlength\itemsep{0.0em}
    \item FC $G$-avg: Fully connected (FC) neural network with group averaging,
    \item Conv1D $G$-avg: 1D convolutional neural network with group averaging,
    \item FC $G$-inv: $G$-invariant neural network (\ref{eq:Gamma}) implementing $f_{in}$ using a FC neural network,
    \item Conv1D $G$-inv: $G$-invariant neural network (\ref{eq:Gamma}) implementing $f_{in}$ using 1D CNN,
    \item Maron: $G$-invariant network proposed in \cite{maron19}.
\end{itemize}

\begin{table*}[ht]
\vskip -0.1in
\caption{Mean absolute errors (MAEs) [$10^{-2}$] of the compared $G$-invariant models for the task of $G$-invariant polynomial regression.}
\label{tab:poly}
\begin{center}
\begin{small}
\begin{sc}
\begin{tabular}{lcccc}
\toprule
Network & Train & Validation & Test & \#Weights [$10^3$]\\
\midrule
FC $G$-avg              & 15.15 $\pm$ 5.49  & 16.48 $\pm$ 0.73  & 16.89 $\pm$ 0.76   & 24.0\\
\textbf{$G$-inv (ours) }         & 2.65 $\pm$ 0.91   & 7.32 $\pm$ 0.55   & 7.46  $\pm$ 0.56   & 24.0\\
Conv1D $G$-avg & 8.98 $\pm$ 6.39 & 11.43 $\pm$ 4.29 & 11.78 $\pm$ 4.79 & 24.0\\
\textbf{Conv1D $G$-inv (ours) }  & \textbf{0.87 $\pm$ 0.12} & \textbf{2.57 $\pm$ 0.37} & \textbf{2.6 $\pm$ 0.4} & 24.0\\
Maron                   & 2.41 $\pm$ 0.82   & 5.74 $\pm$ 1.19   & 5.93  $\pm$ 1.18   & 24.2\\
\bottomrule
\end{tabular}
\end{sc}
\end{small}
\end{center}
\vskip -0.2in
\end{table*}

\subsection{Results for $\mathbb{Z}_5$-invariant Polynomial Regression}
\vspace{-1mm}
\label{sec:poly}
The accuracy of the examined models is given in Table \ref{tab:poly}.
We observe that our proposed Conv1D $G$-inv outperforms all of the other architectures on both datasets. Both Maron and FC $G$-inv obtain worse MAE, but they significantly outperform the Conv1D $G$-avg and FC $G$-avg.
Moreover, those architectures obtain large standard deviations for the training dataset, because sometimes they converge to different error values. In contrast, the performance of the $G$-inv based models and the Maron model is relatively stable under different weight initialization.
While the results are similar for our architecture and the approach introduced in \cite{maron19}, the number of computations used by the Maron's model is significantly larger\footnote{See supplementary material for more detailed analysis.}. The inference time for both networks differs notably, and equals $2.3 \pm 0.4$ms for Conv1D $G$-inv and $21.4 \pm 1.5$ms for Maron.

\vspace{-2mm}
\begin{table*}[ht]
\caption{Mean absolute errors (MAEs) [$10^{-3} \text{unit}^2$] of the compared $G$-invariant models for the task of convex quadrangle area estimation.}
\vspace{-2mm}
\label{tab:area}
\begin{center}
\begin{small}
\begin{sc}
\begin{tabular}{lcccc}
\toprule
Network & Train & Validation & Test & \#Weights\\
\midrule
FC $G$-avg              & 7.0 $\pm$ 0.6 & 9.6 $\pm$ 1.0 & 9.4 $\pm$ 0.9   & 1765\\
\textbf{$G$-inv (ours)} & 7.4 $\pm$ 0.4 & 8.0 $\pm$ 0.3 & 8.3 $\pm$ 0.5   & 1785\\
Conv1D $G$-avg          & 16.9 $\pm$ 7.7 & 16.8 $\pm$ 5.3 & 18.5 $\pm$ 6.8 & 1667\\
\textbf{Conv1D $G$-inv (ours)}  & \textbf{6.0 $\pm$ 0.3} & \textbf{7.3 $\pm$ 0.3} & \textbf{7.5 $\pm$ 0.5}   & 1673\\
Maron                   & 13.9 $\pm$ 0.9 & 22.3 $\pm$ 1.2 & 23.4 $\pm$ 1.3   & 1802\\
\bottomrule
\end{tabular}
\end{sc}
\end{small}
\end{center}
\vskip -0.15in
\end{table*}

\subsection{Results for Estimation of Areas of Convex Quadrangles}
\vspace{-1mm}
The accuracy of the examined models is reported in Table \ref{tab:area}.
The results show that the model utilizing the approach presented in this paper obtains the best performance on all three datasets. 
Furthermore, it generalizes much better to the validation and test dataset than any other tested approach.
We observe that, besides the proposed $G$-invariant architecture, the only approach which was able to reach a low level of MAE in the polynomial approximation task (Maron) is unable to accurately estimate the area of the convex quadrangle.

\section{Conclusion}
\vspace{-3mm}
In this paper, we have proposed a novel $G$-invariant neural network architecture that uses two standard neural networks, connected with the proposed Sum-Product Layer denoted by $\Sigma\Pi$.
We conducted two experiments to analyze the accuracy of the proposed $G$-invariant architecture in comparison with the other $G$-invariant architectures proposed in the literature. The results demonstrate that the proposed $G$-invariant neural network outperforms all other approaches in both tasks. 
We believe that the proposed $G$-invariant neural networks can be employed by researchers to learn group invariant models efficiently in various applications in machine learning, computer vision and robotics. In the future work, we plan to apply the proposed networks for various tasks in computer vision and robot learning, which require vector map processing using the geometric structure of data.

\acksection
\vspace{-3mm} 
This research was partially supported by TAILOR, a project funded by EU Horizon 2020 research and innovation program under GA No. 952215.

\bibliography{example_paper}
\bibliographystyle{plain}

\newpage
\section{Supplementary Material}

For consistency, we first provide our proposed G-invariant neural network architecture in the next section. Then, we give the the proof the Proposition 1.

\subsection{Our Proposed G-invariant Neural Network Architecture}
We introduce a novel $G$-invariant neural network architecture, which exploits the theory of invariant polynomials to achieve a flexible scheme for $G$-invariant transformation of data for some known and finite group $G \leq S_{n}$, where $S_n$ is a symmetric group and $|G| = m$.

We assume that an input $x \in \mathbb{R}^{n \times n_{in}}$ to the proposed network is a tensor\footnote{We use matrix notation to denote tensors in this paper.} $x = [x_1\, x_2\,  \dots\, x_n]^T$ of $n$ vectors $x_i \in \mathbb{R}^{n_{in}}, i =1,2,\ldots, n$. 
A function $f: \mathbb{R}^{n \times n_{in}} \rightarrow \mathbb{R}$  is $G$-invariant if $f$ satisfies
\begin{equation}
\label{eq:sup_mat_Ginv}
  \allfor_{x \in \mathbb{R}^{n \times n_{in}}}\allfor_{g \in G}\,f\left(g(x)\right) = f(x), 
\end{equation}
where\footnote{$\allfor_{y \in Y} P(Y)$ means that ``predicate $P(Y)$ is true for all $y \in Y$''.} the action of the group element $g$ on $x$ is defined by
\begin{equation}
\label{eq:sup_mat_Gequiv}
    g(x) = \{x_{\sigma_g(1)}, x_{\sigma_g(2)}, \dots, x_{\sigma_g(n)}\},
\end{equation}
where $\sigma_g(i)$ denotes the action of the group element $g$ on the specific index $i$ and $x_{\sigma_g(i)} \in \mathbb{R}^{n_{in}}$.

Our proposed $G$-invariant neural network is defined by a  function ${\Gamma: \mathbb{R}^{n \times n_{in}} \rightarrow \mathbb{R}^{n_{out}}}$ of the following form
\begin{equation}
\label{eq:sup_mat_Gamma}
   \Gamma(x) = f_{out}(\Sigma\Pi(f_{in}(x))),
\end{equation}
where $f_{in}$ is a $G$-equivariant input transformation function, $\Sigma\Pi$ is a function which comprises $G$-invariant transformation when combined with $f_{in}$, and $f_{out}$ is an output transformation function.
The general idea of the proposed architecture is to define a $G$-invariant transformation, which uses the sum of $G$-invariant polynomials ($\Sigma\Pi$) of $n$ variables, which are the outputs of $f_{in}$. This transformation produces a $G$-invariant feature vector, which is processed by another function $f_{out}$ that is approximated by a Multi-Layer Perceptron.

First, let us define the $G$-equivariant input transformation function $f_{in}: \mathbb{R}^{n \times n_{in}} \rightarrow \mathbb{R}^{n \times n \times n_{mid}}$, where $n_{mid}$ is the size of the feature vector. This function can be represented as a vector ${\Phi = [\phi_1\, \phi_2\, \dots\, \phi_n]}$ of neural networks, where each function $\phi_i: \mathbb{R}^{n_{in}} \rightarrow \mathbb{R}^{n_{mid}}$ is applied on all elements of the set of input vectors $\{x_i\}_{i=1}^n$, and transforms them to the $n_{mid}$ dimensional vector.
As a result, the operation of the $f_{in}$ function can be formulated by
\begin{equation}
\label{eq:sup_mat_fin}
    f_{in}(x) = 
    \begin{bmatrix}
        \Phi(x_1)\\
        \Phi(x_2)\\
        \vdots\\
        \Phi(x_n)\\
    \end{bmatrix} = 
    \begin{bmatrix}
        \phi_1(x_1) & \ldots & \phi_n(x_1)\\
        \vdots & \ddots & \vdots \\
        \phi_1(x_n) & \ldots & \phi_n(x_n)\\
    \end{bmatrix}.
\end{equation}
One can see that $f_{in}(x)$ is $G$-equivariant, since the action of the vector $\Phi$ of functions is the same for each element of the vector $x$, thus it transposes the rows of the matrix form \eqref{eq:sup_mat_fin} according to $g \in G$, which is equivalent to transposing the rows after the calculation of $f_{in}(x)$.

Second, we define the function $\Sigma\Pi: \mathbb{R}^{n \times n \times n_{mid}} \rightarrow \mathbb{R}^{n_{mid}}$, which constructs $G$-invariant polynomials of outputs obtained from $f_{in}$, by
\vspace{-0.1in}
\begin{equation}
\label{eq:sup_mat_sigmapi}
    \Sigma\Pi(x) = \sum_{g \in G}\prod_{j=1}^{n} x_{\sigma_g(j), j}.
\end{equation}
To see the $G$-invariance of $\Sigma\Pi(f_{in}(x))$, we substitute $x$ from \eqref{eq:sup_mat_sigmapi} with \eqref{eq:sup_mat_fin} to obtain
\vspace{-0.1in}
\begin{equation}
\label{eq:sup_mat_sigmapifin_G}
    \Sigma\Pi(f_{in}(x)) = \sum_{g \in G}\prod_{j=1}^{n}  \phi_j(x_{\sigma_g(j)}).
\end{equation}
Then, we can show that \eqref{eq:sup_mat_sigmapifin_G} is $G$-invariant by checking whether \eqref{eq:sup_mat_Ginv} holds for any input $x$ and any group element $g' \in G$ as follows:
\vspace{-0.05in}
\small
\begin{equation}
\begin{split}
    \Sigma\Pi(g'(f_{in}(x))) &= \sum_{g \in G}\prod_{j=1}^{n} \phi_j(x_{\sigma_{g'}(\sigma_g(j))})\\ 
    &= \sum_{g \in G}\prod_{j=1}^{n}  \phi_j(x_{\sigma_g(j)}) = \Sigma\Pi(f_{in}(x))
\end{split},
\end{equation}
\normalsize
since any group element acting on the group leads to the group itself.
Last, we define the output function $f_{out}: \mathbb{R}^{n_{mid}} \rightarrow \mathbb{R}^{n_{out}}$ following the structure of a typical fully connected neural network by 
\vspace{-0.05in}
\begin{equation}
\label{eq:sup_mat_fout}
  f_{out}(x) = \sum_{i=1}^{N}c_i \sigma\left(\sum_{j=1}^{n_{mid}} w_{ij} x_j + h_i\right),
\end{equation}
where $N \in \mathbb{N}_{+}$ is a parameter, $\sigma$ is a non-polynomial activation function and $c_i, w_{ij}, h_i \in \mathbb{R}$ are coefficients.

\subsection{Proposition 1 and its Proof}

\begin{proposition}
\label{thm:prop}
The network function (\ref{eq:sup_mat_Gamma}), can approximate any $G$-invariant function $f: V \rightarrow \mathbb{R}$, where $V$ is a compact subset of $\mathbb{R}^{n \times n_{in}}$ and $G \leq S_n$ is a finite group, as long as number of features $n_{mid}$ at the output of input transformation network $f_{in}$ is greater than or equal to the size $N_{inv}$ of the generating set $\mathcal{F}$ of polynomial $G$-invariants.
\end{proposition}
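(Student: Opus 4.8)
The plan is to reduce the statement to two classical results: the finite generation of the ring of polynomial invariants of a finite group (Noether's theorem, together with Noether's degree bound), and the universal approximation theorem for multilayer perceptrons with non-polynomial activation. The overall error is split into two independent pieces --- the error in realizing a generating set of invariants with the equivariant-then-symmetrized block $\Sigma\Pi \circ f_{in}$, and the error in approximating the remaining continuous map with $f_{out}$ --- which are then combined on the compact set $V$ by a triangle-inequality argument.

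First I would identify the exact family of invariants produced by one output channel of $\Sigma\Pi(f_{in}(\cdot))$. Fixing a coordinate $k$, the $k$-th component equals $\sum_{g \in G} \prod_{j=1}^{n} [\phi_j(x_{\sigma_g(j)})]_k$, i.e. the group average (orbit sum) of the single product $\prod_j \psi_j(x_j)$ with $\psi_j := [\phi_j]_k$. The key observation is that the orbit sum of any monomial is precisely of this form: writing a monomial as $m(x) = \prod_{j=1}^n \mu_j(x_j)$ with each $\mu_j$ a monomial in the coordinates of $x_j$, one gets $\sum_{g}\prod_j \mu_j(x_{\sigma_g(j)}) = \sum_{g} m(g(x))$. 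Since the invariant ring is spanned as a vector space by orbit sums of monomials and, by Noether's bound, is generated as an algebra by invariants of degree at most $|G|$, one may take the generating set $\mathcal{F} = \{q_1,\dots,q_{N_{inv}}\}$ to consist of orbit sums of monomials of bounded degree. Each $q_k$ is therefore a symmetrized product, and assigning it to channel $k$ only requires the network $\phi_j$ to output the monomial $\mu_j^{(k)}$ in its $k$-th coordinate; as $n_{mid} \ge N_{inv}$, there are enough channels, and universal approximation guarantees that a single $\phi_j$ with $n_{mid}$ outputs can approximate this $\mathbb{R}^{n_{in}}\to\mathbb{R}^{n_{mid}}$ map uniformly on the (compact) projection of $V$.

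Next I would pass from realizing a generating set to approximating $f$ itself. A generating set of polynomial invariants separates the $G$-orbits of $V$ (any two points in distinct orbits are separated by an averaged interpolating polynomial, hence by the generators). Consequently the feature map $Q = (q_1,\dots,q_{N_{inv}})$ descends to a continuous injection of the compact quotient $V/G$ into $\mathbb{R}^{N_{inv}}$, which is a homeomorphism onto its image; since $f$ is constant on orbits, it factors as $f = \tilde f \circ Q$ for a continuous $\tilde f$ on $Q(V)$ (extended to a compact neighbourhood by Tietze). Then $f_{out}$, being an MLP with non-polynomial activation, approximates $\tilde f$ uniformly on that compact set. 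Combining with the previous step and using the uniform continuity of $\tilde f$ to absorb the small discrepancy between the realized features and the exact $q_k$, the triangle inequality yields $\|\Gamma - f\|_{\infty,V} < \varepsilon$.

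The main obstacle is the middle step: showing that the rather rigid symmetrized-product form of $\Sigma\Pi$ is expressive enough to realize an entire generating set of the invariant ring. The orbit-sum-of-monomial identity is what makes this work --- it shows each generator can be realized by one channel with polynomial $\phi_j$ --- but care is needed to (i) guarantee that a finite generating set of this special shape exists (Noether's bound), (ii) ensure the $n_{mid}$ channels can be produced simultaneously by the shared networks $\phi_j$, and (iii) control the propagation of the $\phi_j$-approximation error through the products and the outer map $\tilde f$, all uniformly over the compact domain $V$.
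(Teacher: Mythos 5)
Your proposal is correct, and its core coincides with the paper's own argument: both rest on the same key lemma, namely that each output channel of $\Sigma\Pi(f_{in}(\cdot))$ computes an orbit sum $\sum_{g\in G}\prod_{j=1}^{n}\psi_j(x_{\sigma_g(j)})$, that the invariant ring admits a generating set of exactly this symmetrized-monomial shape, and that the shared networks $\phi_j$ can realize the monomial factors to $\epsilon$-precision by universal approximation (this is the paper's Lemma~\ref{thm:lemma}), after which a triangle inequality over the compact domain finishes the job. Where you genuinely diverge is the outer half. The paper invokes Yarotsky's theorem (its Theorem~\ref{thm:poly}) as a black box --- any continuous invariant map is approximable by an MLP applied to the generating invariants --- and then propagates the feature error through $f_{out}$ via a local Lipschitz constant $k$, obtaining the bound $\epsilon(3+kmn)$. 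You instead reprove that ingredient directly: the generators separate $G$-orbits, so $Q=(q_1,\dots,q_{N_{inv}})$ induces a homeomorphism of the compact quotient onto its image, $f$ factors as $\tilde f\circ Q$ with $\tilde f$ continuous (Tietze), and $f_{out}$ approximates $\tilde f$ uniformly. This buys two improvements over the paper. First, you justify via Noether's degree bound that a generating set consisting of orbit sums of monomials actually exists, where the paper merely asserts in \eqref{eq:sup_mat_inv_f_i_sigma}--\eqref{eq:sup_mat_inv_f_i_pi} that every generator has this form, and writes the monomial as $\prod_i x_i^{b_i}$ as though $n_{in}=1$, while your $\mu_j$ handle general $n_{in}$ cleanly. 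Second, your error bookkeeping is sounder: in the paper's final estimate the Lipschitz constant $k$ of $f_{out}$ implicitly depends on $\epsilon$ (since $f_{out}$ is chosen to $\epsilon$-approximate $\hat f$), so the quantifier order there is delicate, whereas absorbing the feature discrepancy through the uniform continuity of the fixed function $\tilde f$, as you do, removes that dependence; it also dispenses with the paper's technical assumption $0\notin V$, which serves only its crude $mn\epsilon$ product bound (a bound whose constant should in any case carry a factor of order $M^{n-1}$ for $M$ bounding the coordinates on $V$, as your telescoping-on-a-compact-set viewpoint makes clear). One small point to make explicit in your writeup: $V$ itself need not be $G$-stable, so pass to the compact saturation $\bigcup_{g\in G} g(V)$ before quotienting.
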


\begin{proof}
In the proof, without the loss of generality, we consider the case when $n_{out} = 1$, as the approach can be generalized for arbitrary $n_{out}$. Moreover, we assume that 
\begin{equation}
\label{eq:sup_mat_V}
    0 \notin V
\end{equation}
to avoid the change of sign when approximating polynomials of inputs, but it is not a limitation because any compact set can be transformed to such a set by a bijective function.

To prove the Proposition \ref{thm:prop}, we need to employ two theorems:
\begin{theorem}[\cite{yarotsky}]
\label{thm:UAT}
Let $\sigma: \mathbb{R} \rightarrow \mathbb{R}$ be a continuous activation function that is not a polynomial. Let $V=\mathbb{R}^d$ be a real finite dimensional vector space.
Then, any continuous map ${f:V \rightarrow \mathbb{R}}$ can be approximated, in the sense of uniform convergence on compact sets, by
\begin{equation}
\label{eq:sup_mat_pinkus}
  \hat{f}(x_1, x_2, \dots, x_d) = \sum_{i=1}^{N}c_i \sigma\left(\sum_{j=1}^{d} w_{ij} x_j + h_i\right)
\end{equation}
with a parameter $N \in \mathbb{N}_+$ and coefficients $c_i, w_{ij}, h_{i} \in \mathbb{R}$.
\end{theorem}
\vspace{-0.15in}
The above version of the theorem comes from the work of \cite{yarotsky}, but it was proved by \cite{pinkus}.

\begin{theorem}[\cite{yarotsky}]
\label{thm:poly}
Let $\sigma: \mathbb{R} \rightarrow \mathbb{R}$ be a continuous activation function that is not a polynomial, $G$ be a compact group, $W$ be a finite-dimensional $G$-module and $f_1,\ldots,f_{N_{inv}}: W \rightarrow \mathbb{R}$ be a finite generating set of polynomial invariants on $W$ (existing by Hilbert’s theorem).
Then, any continuous invariant map $f:W \rightarrow \mathbb{R}$ can be approximated by an invariant map $\hat{f}:W \rightarrow \mathbb{R}$ of the form
\begin{equation}
\label{eq:sup_mat_yarotsky}
  \hat{f}(x) = \sum_{i=1}^{N}c_i \sigma\left(\sum_{j=1}^{N_{inv}} w_{ij} f_j(x) + h_i\right)
\end{equation}
with a parameter $N \in \mathbb{N}_+$ and coefficients $c_i, w_{ij}, h_{i} \in \mathbb{R}$.
\end{theorem}
\vspace{-0.1in}
The accuracy of the approximation (\ref{eq:sup_mat_yarotsky}) has been proven to be $2\epsilon$ for some arbitrarily small positive constant $\epsilon$. 
Note that the function $f_{out}$ is of the same form as the function $\hat{f}$. Then, one can accurately imitate the behavior of $\hat{f}$ using $f_{out}$, if the input to both functions are equivalent.

\begin{lemma}
\label{thm:lemma}
For every element $f_i: V \rightarrow \mathbb{R}$ of the finite generating set $\mathcal{F} = \{ f_i\}_{i=1} ^{N_{inv}}$ of polynomial $G$-invariants on $V$, there exists an approximation of the form (\ref{eq:sup_mat_sigmapifin_G}), linearly dependent on $\epsilon$, where $G \leq S_n$ is an $m$ element subgroup of the $n$ element permutation group and $\epsilon$ is an arbitrarily small positive constant.
\end{lemma}

\begin{proof}
Any function $f_i \in \mathcal{F}$ has the following form
\begin{equation}
\label{eq:sup_mat_inv_f_i_sigma}
    f_i(x) = \sum_{g \in G} \psi(g(x)),
\end{equation}
where
\begin{equation}
\label{eq:sup_mat_inv_f_i_pi}
    \psi(x) = \prod_{i=1}^{n} x_i^{b_i},
\end{equation}
and $b_i$ are fixed exponents.
Combining (\ref{eq:sup_mat_inv_f_i_sigma}) and (\ref{eq:sup_mat_inv_f_i_pi}), we obtain:
\begin{equation}
\label{eq:sup_mat_inv_f_i}
    f_i(x) = \sum_{g \in G} \prod_{i=1}^{n} x_{\sigma_g(i)}^{b_i},
\end{equation}
which has a similar form as \eqref{eq:sup_mat_sigmapifin_G}.
This resemblance is not accidental, but in fact, $\Sigma\Pi(f_{in}(x))$ can
approximate $n_{mid}$ functions belonging to the set $\mathcal{F}$.
Using Theorem \ref{thm:UAT} and the fact that $\phi_i$ is a neural network satisfying (\ref{eq:sup_mat_pinkus}), we observe that $\phi_j(x_i)$ can approximate any continuous function with $\epsilon$ precision. Thus, it can approximate $x_{i}^{b_i}$ for some constant parameter $b_i$. It is possible to provide an upper bound on the approximation error $\left| f_i(x) - \Sigma\Pi_i(f_{in}(x)) \right|$
by
\begin{equation}
\label{eq:sup_mat_sigmapifin_err}
\begin{split}
    &\left| f_i(x) - \Sigma\Pi_i(f_{in}(x)) \right| \stackrel{(\ref{eq:sup_mat_inv_f_i}, \ref{eq:sup_mat_sigmapifin_G})}{=} \\
    &\left| \sum_{g \in G} \prod_{i=1}^{n} x_{\sigma_g(i)}^{b_i} - \sum_{g \in G}\prod_{j=1}^{n}  \phi_j(x_{\sigma_g(j)}) \right| \leq\\
    & \sum_{g \in G} \left|\prod_{i=1}^{n} x_{\sigma_g(i)}^{b_i} - \prod_{j=1}^{n}  \phi_j(x_{\sigma_g(j)}) \right| \leq\\
    & \sum_{g \in G} \left|\prod_{i=1}^{n} x_{\sigma_g(i)}^{b_i} - \prod_{j=1}^{n}  (x_{\sigma_g(j)}^{b_j} - \epsilon) \right| \stackrel{(\ref{eq:sup_mat_V})}{\leq} mn\epsilon\\
\end{split},
\end{equation}
for some arbitrarily small positive constant $\epsilon$.
\end{proof}
Assuming that the number of features $n_{mid}$ at the output of input transformation network $f_{in}$ is greater than or equal to the size of the generating set $\mathcal{F}$, it is possible to estimate each of $f_i(x)$ using \eqref{eq:sup_mat_sigmapifin_G}\footnote{Size of the generating set $\mathcal{F}$ depends on the complexity of the function (in terms of approximating it with the use of polynomials). However, we conjecture that in practice the $n_{mid}$ can be lower than the size of $\mathcal{F}$, as using neural networks to transform the input features offers a much 
more flexible representation than only its powers.}.

The last step for completing the proof of the Proposition \ref{thm:prop}, using Theorem \ref{thm:UAT}, Theorem \ref{thm:poly}, and the proposed Lemma \ref{thm:lemma}, is to show that
\begin{equation}
    \left| f(x) - \Gamma(x) \right| \leq \epsilon c,
\end{equation}
where $c \in \mathbb{R}$ is a constant.

Let us consider the error
\begin{equation}
\label{eq:sup_mat_last_bound}
\begin{split}
    & \left| f(x) - \Gamma(x) \right| \stackrel{(\ref{eq:sup_mat_Gamma})}{=} \left| f(x) - \hat{f}(x)\right| + \\
    & + \left|\hat{f}(x) - f_{out}(\Sigma\Pi(f_{in}(x))) \right| \stackrel{\text{Thm.} \ref{thm:poly}}{=} \\
    & 2\epsilon + \left|\hat{f}(x) - f_{out}(\Sigma\Pi(f_{in}(x))) \right| = \\
    & \left.  f_{out}(\mathcal{F}(x)) - f_{out}(\Sigma\Pi(f_{in}(x))) \right| \leq \\
    & 2\epsilon + \left|\hat{f}(x) - f_{out}(\mathcal{F}(x)) \right| + \\
    & \left| f_{out}(\mathcal{F}(x)) - f_{out}(\Sigma\Pi(f_{in}(x))) \right| \stackrel{\text{Thm.} \ref{thm:UAT}, }{\leq} \\
    & 3\epsilon + \left| f_{out}(\mathcal{F}(x)) - f_{out}(\Sigma\Pi(f_{in}(x))) \right|
\end{split}.
\end{equation}
Several transformations presented in (\ref{eq:sup_mat_last_bound}) result in the formula which is a sum of $3\epsilon$ and the absolute difference of $f_{out}(\mathcal{F})$ and $f_{out}(\Sigma\Pi(f_{in}(x)))$. From (\ref{eq:sup_mat_sigmapifin_err}), we have that the difference of the arguments is bounded by $mn\epsilon$. Consider then a ball $B_{mn\epsilon}(x)$ with radius $mn\epsilon$ centered at $x$. Since $f_{out}$ is a MLP (multi-layer perceptron), which is at least locally Lipschitz continuous, we know that its output for $x' \in B_{mn\epsilon}(x)$ can change at most by $kmn\epsilon$, where $k$ is a Lipschitz constant. From those facts, we can provide an upper bound on the error (\ref{eq:sup_mat_last_bound}) by
\begin{equation}
\begin{split}
    & \left| f(x) - \Gamma(x) \right| = \\
    & 3\epsilon + \left| f_{out}(\mathcal{F}(x)) - f_{out}(\Sigma\Pi(f_{in}(x))) \right| \leq \\
    & 3\epsilon + kmn\epsilon = \epsilon(3 + kmn) = \epsilon c \\
\end{split}.
\end{equation}
\end{proof}

\subsection{Computational efficiency analysis}
Having proved that the proposed approach is universal we elucidate its computational and memory complexity.

The tensor with the largest size is obtained at the output of the $f_{in}$ function. The size of this tensor is equal to $n^2 n_{mid}$, where we assume that $n_{mid} \geq N_{inv}$ and it is a design parameter of the network. So, the memory complexity is of the order $n^2 n_{mid}$, which is polynomial. However, the complexity of the method proposed by \cite{maron19}, is of the order $n^p$, where $\frac{n-2}{2} \leq p \leq \frac{n(n-1)}{2}$ depending on the group $G$.

In order to evaluate the function $\Sigma\Pi$, $m (n-1) n_{mid}$ multiplications are needed, where $m = |G|$ and $n_{mid}$ is a parameter, but we should assure that $n_{mid} \geq N_{inv}$ to ensure universality of the proposed method (see Section 3.3).
It is visible, that the growth of the number of computations is linear with $m$.
For smaller subgroups of $S_n$, such as $\mathbb{Z}_n$ or $D_{2n}$, where $m \propto n$, the number of the multiplications is of order $n^2$, which is a lot better than the number of multiplications performed by the $G$-invariant neural networks proposed in \cite{maron19}, which is of order $n^p$.
However, for big groups, where $m$ approaches $n!$, the number of multiplications increases. 
Although the proposed approach can work for all subgroups of $S_n$ ($m$=$n!$), it suits the best for small and medium size groups, yet not less important, such as cyclic groups $Z_n$, $D_{2n}$, $S_k$ ($k < n$) or their direct products.

Moreover, the proposed $\Sigma\Pi$ can be implemented efficiently on GPUs using a parallel implementation of matrix multiplication and reduction operations in practice.

As in the proposed approach, we also conduct $m$ summation's through all elements of the group $G$, we would like to compare its efficiency to the group averaging approach. However, note that in case of group averaging, whole $G$-invariant processing pipeline is multiplied $m$ times, whereas for our approach term $m$ is present only in $\Sigma\Pi$ function, and the $G$-invariant pipeline is multiplied $n$ times. Thus, we can use relatively big neural networks in the first stages of the processing, as the number of computations at this stage scales linearly with $n$, not $m$. Let's consider a simple example of generation a $G$-invariant representation using two aforementioned methods with a single layer of neural network (the best case for group averaging, as its processing pipeline scales with $m$ and we reduced it to the minimum), where $n = 10$, $n_{in} = 2$, $n_{mid} = 32$, $G = S_5 \times S_5$, $m = 120^2$. Then number of multiplications for group averaging is equal to $m(n\cdot n_{in} \cdot n_{mid}) = 120^2(10\cdot2\cdot32) = 9.216 \cdot 10^6$, whereas for our approach $n \cdot n \cdot n_{in} \cdot n_{mid} + m \cdot (n - 1) \cdot n_{mid} =  10\cdot10\cdot2\cdot32 + 120^2 \cdot 9 \cdot 32 = 4.1536 \cdot 10^6$.

\end{document}